\begin{document}
\begin{CJK}{UTF8}{gbsn}

\title{Boolean-aware Boolean Circuit Classification: A Comprehensive Study on Graph Neural Network}

\author{ Liwei~Ni$^{1,2,3}$, Xinquan~Li$^{2}$, Biwei~Xie$^{1,2,3}$ and Huawei~Li$^{1,2,3}$ \\
$^1$Institute of Computing Technology, Chinese Academy of Sciences, Beijing, China \\
$^2$Peng Cheng Laboratory, Shenzhen, China \\
$^3$University of Chinese Academy of Sciences, Beijing, China \\
Emails: nlwmode@gmail.com
}

\maketitle

\begin{abstract}
Boolean circuit is a computational graph that consists of the dynamic directed graph structure and static functionality.
The commonly used logic optimization and Boolean matching-based transformation can change the behavior of the Boolean circuit for its graph structure and functionality in logic synthesis.
The graph structure-based Boolean circuit classification can be grouped into the graph classification task, however, the functionality-based Boolean circuit classification remains an open problem for further research.
In this paper, we first define the proposed \textit{matching-equivalent class} based on its ``Boolean-aware'' property.
The Boolean circuits in the proposed class can be transformed into each other.
Then, we present a commonly study framework based on graph neural network~(GNN) to analyze the key factors that can affect the Boolean-aware Boolean circuit classification.
The empirical experiment results verify the proposed analysis, and it also shows the direction and opportunity to improve the proposed problem.
The code and dataset will be released after acceptance.
\end{abstract}

\begin{IEEEkeywords}
classification, logic optimization, Boolean matching, logic synthesis
\end{IEEEkeywords}




\section{Introduction}




Boolean circuit~\cite{boolean_circuit_spring10} is a computational graph that consists of two parts: the dynamic directed graph structure~(DAG), and the static functionality.
The Boolean circuit classification is an important problem in the logic synthesis domain.
It can enhance many tasks, like logic optimization~\cite{drills}, technology mapping~\cite{lsoracle, liu2023aimap}, formal verification~\cite{NP3}, reverse engineering~\cite{reverse_engineering}, \textit{etc}.
From the component of the Boolean circuit, the geometric structure and functionality are the two aspects focused on in the downstream applications and tasks, also fit for the Boolean circuit classification problem.

For the geometric structure aspect, the Boolean circuit classification problem focuses on the graph structure of the Boolean circuit. 
Hence, it can be classified into the common graph classification problem\cite{classification_survery}, and this topic is also mature enough by the deep learning approaches.
However, for the functionality aspect, the ``Boolean-aware Boolean circuit classification'' is still an open problem for further study.
It requires that the functionality is the key point for consideration of the Boolean-aware circuit classification.
The main problems are the definition of ``Boolean-aware'' and its corresponding class.

Logic optimization and Boolean matching are the commonly used Boolean transformations in logic synthesis~\cite{nutshell}.
Logic optimization~\cite{rewrite, refactor, balance} can change the Boolean circuit structure while retaining the functionality.
The Boolean matching~\cite{boolean_matching} can modify the Boolean function of Boolean circuits while an invertible operation can recover its Boolean function.
Thus, we define the ``Boolean-aware'' that involves logic optimization and Boolean matching, and the ``matching-equivalent class'' for their operations as shown in the \cref{fig:boolean_transformation}.
The logic optimization operations generate Boolean circuits in a logic-equivalent class, and Boolean matching can modify Boolean circuits into other logic-equivalent classes. 
Then these different logic-equivalent classes can be merged into the defined matching-equivalent class by the invertible laws.


\begin{figure}
    \centering
    \includegraphics[width=0.48\textwidth]{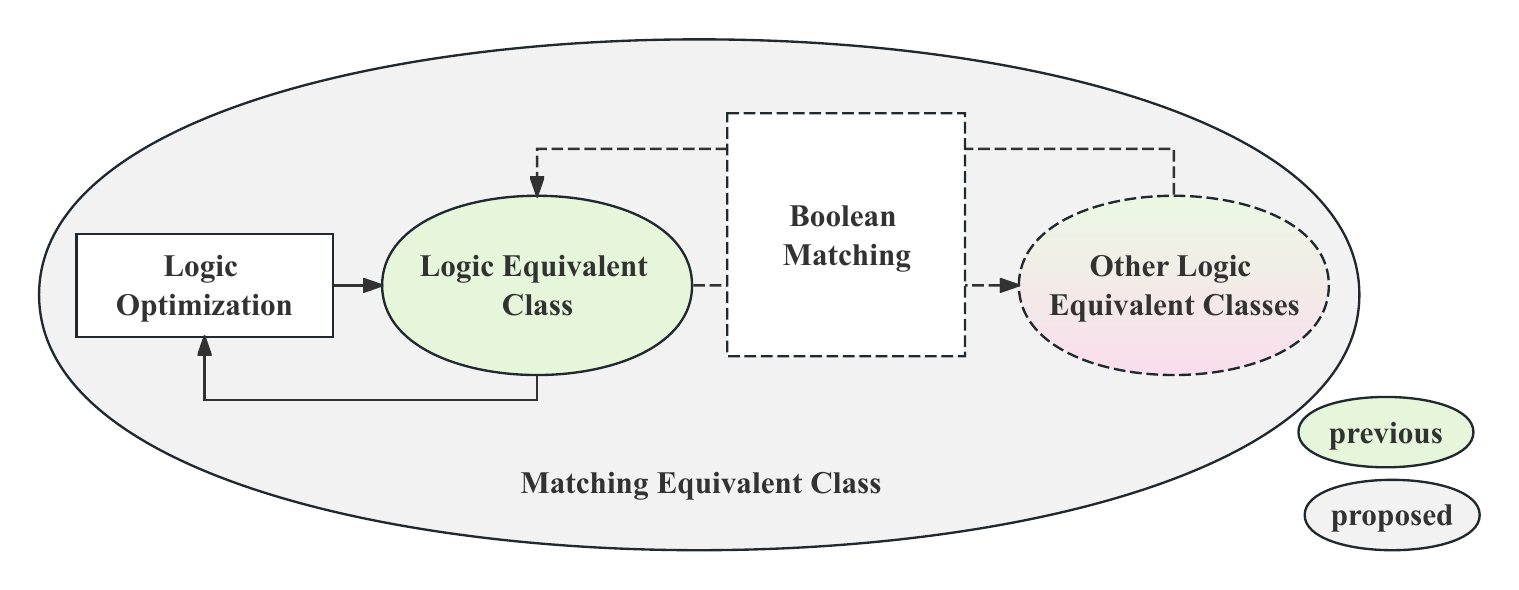}
    \caption{The illustration of the Boolean transformations of logic optimization and Boolean matching. The previous logic equivalent class generated by logic optimization can be transformed into the ``other logic equivalent class'' by Boolean matching operations. All these transformations bring about the proposed matching equivalent class.}
    \label{fig:boolean_transformation}
\end{figure}

After defining the matching-equivalent class based on the Boolean-aware property, we also give a comprehensive study framework based on GNN.
It mainly consists of two parts: the preprocessing phase, and the GNN-based graph embedding and classification phase.
The preprocessing phase shows that even simple modification can improve the accuracy.
The GNN phase mainly focuses on how the node/graph embedding affects the classification results of the Boolean transformation operations.
The empirical experiment results show the direction and opportunity to improve the proposed problem.
The main contributions of this paper are as follows:
\begin{itemize}
    \item We propose the \textit{matching equivalent class} and its corresponding classification problem based on the Boolean-aware property.
    \item We give plenty of theorem and analysis on the properties of matching equivalent class.
    \item We present a commonly study framework based on GNN.
    \item The experimental results echo the theorem and analysis, which also show the research direction in this task.
\end{itemize}

The rest of the paper is organized as follows: 
\cref{sec:preliminary} presents the preliminaries and background.
\cref{sec:problem} gives the definition of Boolean-aware Boolean circuit classification.
\cref{sec:study} proposes the study framework.
\cref{sec:experiment} shows a series of ablation experiments.
\cref{sec:application} shows the flows of application domains.
Finally, \cref{sec:conclusion} concludes the paper.

\section{Preliminary}
\label{sec:preliminary}
\subsection{Notations}
\paragraph{(Vectorial) Boolean Function.}
A Boolean function~\cite{boolean_function_Donnell21} is a function that operates on one or more Boolean values~(\{0,1\}) and produces a single Boolean value as its output. 
Let $n,m \in \mathbb{N}$. A function $\mathbb{F}_2^{n} \rightarrow \mathbb{F}_2$, with
$$
(x_1, ..., x_n) \mapsto f(x_1, ..., x_n)
$$
is called a \textit{Boolean function}, denoted as $f$.
The set of \textit{Boolean variables} can be denoted as a coordinated vector $\Vec{x} = (x_1, ..., x_n)$ with $n$ Boolean variables. 
All the Boolean variables and Boolean functions take the Boolean value from the Boolean domain $\mathbb{B} = \{0, 1\}$.
Similarly, a function $\mathbb{F}_2^{n} \rightarrow \mathbb{F}_2^{m}$ with 
$$
(x_1, ..., x_n) \mapsto \big{(}f_{1}(x_1, ..., x_n), ..., f_{m}(x_1, ..., x_n)\big{)}
$$
is called \textit{Vectorial Boolean function}, denoted as $\Vec{f}$. 
And the function $f_i:\mathbb{F}_2^{n} \rightarrow \mathbb{F}_2$ are also called the \textit{coordinate functions} of $\Vec{f}$.
The vectorial Boolean function $\Vec{f}$ is degenerated into a Boolean function $f$ iff $m = 1$.

\paragraph{Boolean Circuit.}
A Boolean circuit is defined by the logic gates it contains and consists of two parts: the DAG and the associated Boolean function.
In the following, we will use the Boolean function to refer to either the Boolean function or vectorial Boolean function.
Let $\mathcal{C}=(\mathcal{V}, \mathcal{E})$ denote as a Boolean circuit with nodes set $\mathcal{V}$ and edges set $\mathcal{E}$, where $\mathcal{V} = \mathcal{V}^{PI} \cup \mathcal{V}^{G} \cup \mathcal{V}^{PO}$, and $\mathcal{V}^{PI}$ is the set of primary input nodes (PIs), $\mathcal{V}^{PO}$ is the set of primary output nodes (POs), and $\mathcal{V}^{G}$ is the set of internal logic gates. 
Each edge $v_i \to v_j$ in $\mathcal{E}$ represents the computational dependency relationship between different nodes.
Also, as for the DAG with $n$ nodes, $\mathbi{A}^{n\times n}$ is defined as the Boolean adjacent matrix that $\mathbi{A}_{i,j} = true$ means that there exists an edge $v_i \to v_j$.
For convenience, $\mathcal{C}^{G}$ is defined as the DAG structure of $\mathcal{C}$, meanwhile, $\mathcal{C}^{F}$ is defined as its (vectorial) Boolean function.

\subsection{Boolean Transformation}

\subsubsection{Transformation of Logic Optimization.}

\begin{definition}[logic/Boolean equivalent]
Given a pair of Boolean circuits ($\mathcal{C}_1$, $\mathcal{C}_2$), we can say that $\mathcal{C}_1$ is logic equivalent to $\mathcal{C}_2$, denoted as $\mathcal{C}_1 \equiv \mathcal{C}_2$, in the following three conditions:
\begin{equation}
\nonumber
\begin{aligned}
&\big{(}\mathcal{C}_1^{F} \equiv \mathcal{C}_2^{F}\big{)} \Rightarrow(\mathcal{C}_1 \equiv \mathcal{C}_2);~~
\big{(}\mathcal{C}_1^{G} \equiv \mathcal{C}_2^{G}\big{)} \Rightarrow(\mathcal{C}_1 \equiv \mathcal{C}_2); \\
&\big{(} (\mathcal{C}_1^{G} \neq   \mathcal{C}_2^{G}) \land (\mathcal{C}_1^{F} \equiv \mathcal{C}_2^{F}) \big{)}  \Rightarrow~~(\mathcal{C}_1 \equiv \mathcal{C}_2).
\end{aligned}
\end{equation}
\label{def:logic_equivalence}
\vspace{-1.5em}
\end{definition}

Logic optimization is an important step in the logic synthesis~\cite{nutshell} which aims to improve the circuit's efficiency and performance.
And the commonly used operations in logic optimization are under the constraint of Boolean equivalence as defined above.
Thus, Boolean equivalence~\cite{boolean_equivalence_iccad02} asserts that Boolean circuits with the same Boolean function can have different computational graph structures, which can pose big challenges for the classification tasks.
And \cref{fig:full_adder} (a)$\to$(b) shows the result of the logic optimization transformation.

\begin{figure}[t]
    \centering
    \includegraphics[width=0.45\textwidth]{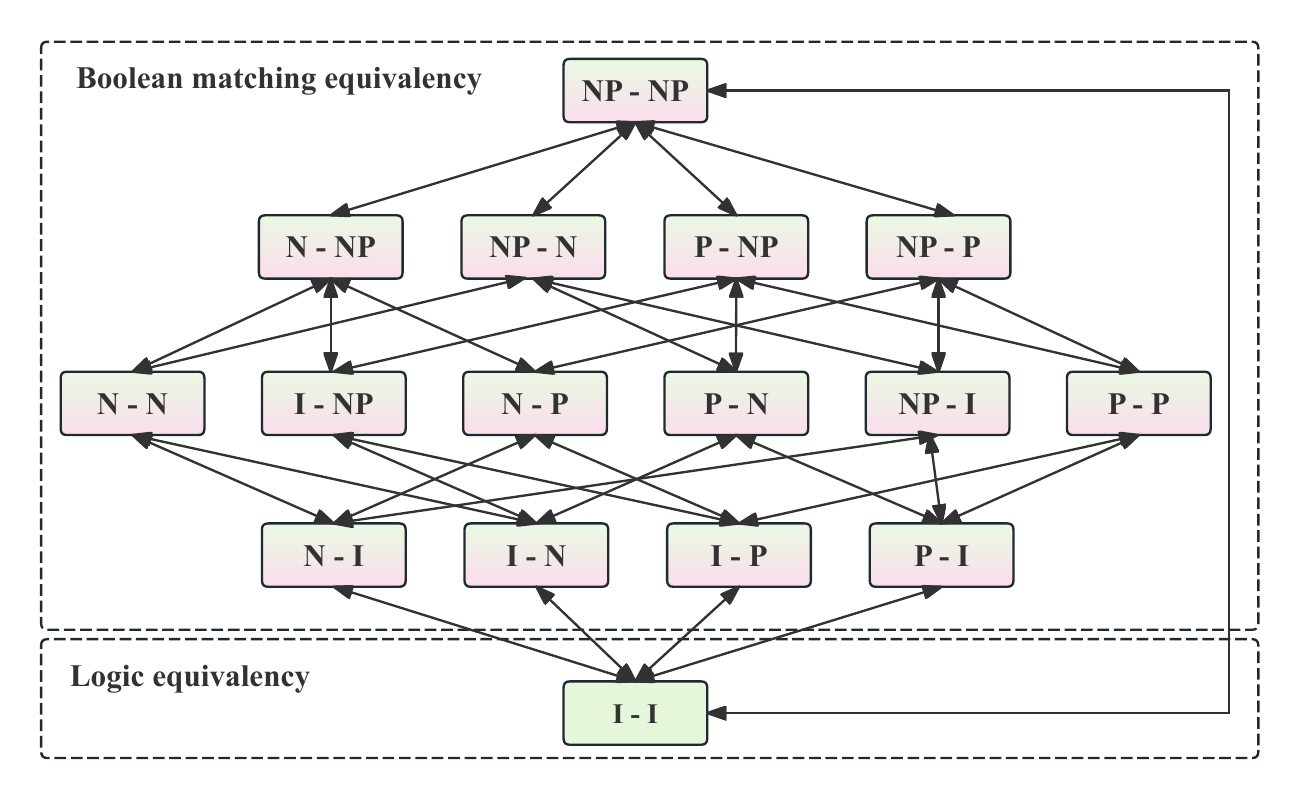}
    \caption{The domain relation of the logic equivalency and Boolean matching equivalency.}
    \vspace{-1.5em}
    \label{fig:domain}
\end{figure}

\subsubsection{Transformation of Boolean Matching.}
\begin{definition}[permutation]
The permutation~(\textbf{P}) function $\pi$ over a node set $\mathbf{S}~(\mathbf{S} \in \mathcal{V})$ is a bijection function that $\pi(\mathbf{S})=\mathbf{S}'$, which will lead to $\mathbf{S}$ and $\mathbf{S}'$ having different orders.
\label{def:perm}
\vspace{-1.5em}
\end{definition}

\begin{definition}[negation]
The negation~(\textbf{N}) function $\nu$ over a node set $\mathbf{S}~(\mathbf{S} \in \mathcal{V})$ is a componentwise mapping function that $\nu(\mathbf{S}_i) = \neg \mathbf{S}_i$, which will lead to the negation of the node.
\label{def:neg}
\vspace{-1.5em}
\end{definition}

\begin{definition}[Boolean matching]
\label{def:boolean_matching}
Given a pair of Boolean circuits ($\mathcal{C}_1$, $\mathcal{C}_2$), we can say that $\mathcal{C}_1$ is Boolean matched to $\mathcal{C}_2$ under the transformation of Boolean matching, denoted as $\mathcal{C}_1 \circeq \mathcal{C}_2$, and it can be formulated as follows:
\begin{equation}
\nonumber
\begin{aligned}
\exists \nu_{O} \circ \pi_{O}, \exists \nu_{I} \circ \pi_{I}, \forall \Vec{x}, \Vec{y}, & \bigg{(}\Vec{f}(\Vec{x}) \wedge \nu_{O} \circ \pi_{O}\Big{(}\Vec{g}\big{(}\nu_{I} \circ \pi_{I}(\Vec{y})\big{)}\Big{)}\bigg{)} \\
                                                 & \Rightarrow \bigwedge_{i=1}^{m}\Big{(}f_i(\Vec{x}) \equiv {g'}_i\big{(}\nu_{I} \circ \pi_{I}(\Vec{y})\big{)}\Big{)} \\
                                                 & \Rightarrow \bigwedge_{i=1}^{m}\Big{(}f_i(\Vec{x}) \equiv {g'}_i\big{(}\Vec{y'})\big{)}\Big{)}
\end{aligned}
\end{equation}
where $\nu_{I}$ and $\pi_{I}$ mean the transformation functions act on the input nodes~($\Vec{x}$ and $\Vec{y}$), while $\nu_{O}$ and $\pi_{O}$ mean on the output nodes~($\Vec{f}$ and $\Vec{g}$), $\circ$ means the joint action, and $g' = \nu_{O} \circ \pi_{O}(g)$.
\end{definition}

According to \cref{def:boolean_matching}, the Boolean matching~\cite{boolean_matching} process involves determining whether two Boolean Circuits are equivalent under the permutation or negation transformations.
It plays a critical role in tasks like logic optimization, technology mapping, and verification by confirming functional equivalency.
And \cref{fig:full_adder} (a)$\to$(c) and (b)$\to$(d) illustrate the result of the permutation transformation process.

\section{Analysis of the Boolean-aware Boolean Circuit Classification Problem}
\label{sec:problem}
In this section, we first give the domain of the ``Boolean-aware''.
Then, we analyze the laws and influence of the Boolean transformation under the given domain.
Finally, we give the details of the problem formulation.

\subsection{The domain of \textbf{\textit{Boolean-aware}}}
The domain of the ``Boolean-aware'' focuses on the Boolean transformation of logic optimization and Boolean matching.

\subsubsection{Domain relation}

\begin{definition}[identity]
The identity~(\textbf{I}) function $\iota$ refers to the logic equivalent functions over a node set $\mathbf{S}$($\mathbf{S} \in \mathcal{V}$).
\label{def:identity}
\end{definition}

\begin{remark}
The identity function $\iota$ is to cope with the permutation function $\pi$, and negation function $\nu$ that it has no influence on the inputs or outputs.
\end{remark}

The domain relation is based on the ``X-Y equivalence'' under the Boolean transformation for X, Y $\in$ \{I, N, P, NP\}, where X and Y denote equivalence conditions on the input and output sides, respectively, and I, N, P, and NP stand for identity, negation, permutation, and negation plus permutation, respectively.
According to \cref{fig:domain}, there are 16 ``X-Y'' equivalence of the domain of ``Boolean-aware''.
The ``I-I'' refers to the logic equivalence, and the other 15 types are the equivalence under Boolean matching.

\subsubsection{Complexity}
And for a certain $n$-input and $m$-output vectorial Boolean function $\Vec{f}$, the complexity of the negation $\nu$ and permutation $\pi$ are  $O(2^{n} \cdot 2^{m})$ and $O(n! \cdot m!)$ respectively.
Thus, the worst complexity of the worst condition~(NPNP) of Boolean matching is $O(2^{n} \cdot n! \cdot 2^{m} \cdot m!)$.

\subsection{Analysis on the laws}
\label{sec:problem:expand}

\subsubsection{The laws under logic optimization}


\begin{table}[t]
\centering
\caption{The laws under logic optimization}
\begin{tabular}{cc}
\toprule
\thead{Logic Equivalent Law} & \thead{Description}  \\
\toprule
Associative Law     &  \makecell{$A\wedge(B\wedge C) \Leftrightarrow (A\wedge B)\wedge C$,\\ $A\vee(B\vee C) \Leftrightarrow (A\vee B)\vee C$ }  \\ \hline
Distributive Law    &  \makecell{$A\wedge(B\vee C) \Leftrightarrow (A \wedge B) \vee (A \wedge C)$,\\ $A\vee(B\wedge C) \Leftrightarrow (A \vee B) \wedge (A \vee C)$ } \\ \hline
Double Negation Law &  \makecell{$\neg (\neg A) \Leftrightarrow A$ } \\ \hline
De Morgan's Law     &  \makecell{$ \neg(A\vee B)  \Leftrightarrow \neg A \wedge \neg{B} $,\\  $\neg(A\wedge B)  \Leftrightarrow \neg A \vee \neg{B}$ }\\ \hline
Zero Law            &  \makecell{$A\vee 0 \Leftrightarrow 0$, $A\wedge 0 \Leftrightarrow A$ } \\ \hline
Identity Law        &  \makecell{$A\vee 1 \Leftrightarrow A$, $A\wedge 1 \Leftrightarrow 1$ } \\ \hline
...   &  ... \\
\bottomrule
\end{tabular}
\label{tab:le_laws}
\vspace{-1.5em}
\end{table}

\begin{definition}[logic equivalent class]
Given any two Boolean circuits $\mathcal{C}_1$ and $\mathcal{C}_2$ in a logic equivalent class, which can derive $\mathcal{C}_1 \equiv \mathcal{C}_2$.
\label{def:le_class}
\end{definition}

The base principle for the logic optimization operations is based on the logic equivalent transformation.
In addition, the logic equivalent transformation is the combination of the logic equivalent laws as shown in \cref{tab:le_laws}.
According to the definition of ``logic equivalent'', the Boolean function can be defined as the identifier of a Boolean circuit.
The logic optimization operations can generate the logic equivalent classes as the statement of \cref{def:le_class}.

The \cref{fig:full_adder:a}$\leftrightarrow$\cref{fig:full_adder:b} shows the logic optimization by equivalent local replacement.
We can get ``$SUM=C_{in} \oplus (A \oplus B)$, $C_{out}=\big{(}A \wedge B\big{)} \vee \big{(}C_{in} \wedge (A \oplus B)\big{)}$'' in \cref{fig:full_adder:a}, and ``$SUM=C_{in} \oplus (A \oplus B)$, $C_{out}=\big{(}A \wedge B\big{)} \vee \big{(}C_{in} \wedge (A \vee B)\big{)}$'' in \cref{fig:full_adder:b}. And these two $C_{out}$ are logic equivalent through the logic equivalent laws in \cref{tab:le_laws}:
\begin{footnotesize}
\begin{equation}
\nonumber
\begin{aligned}
C_{out} &= \big{(}A \wedge B\big{)} \vee \big{(}C_{in} \wedge (A \vee B)\big{)} \\
        &= (A \wedge B) \vee (A \wedge C_{in}) \vee (B \wedge C_{in}) \\
        &= (A \wedge B) \vee (A \wedge C_{in}) \vee \big{(}B \wedge C_{in} \wedge (A \vee \neg{A})\big{)} \\
        &= (A \wedge B) \vee (A \wedge B \wedge C_{in}) \vee (A \wedge C_{in}) \vee (\neg{A} \wedge B \wedge C_{in}) \\
        &= \big{(}A \wedge B \wedge (1 \vee C_{in})\big{)} \vee (A \wedge C_{in}) \vee (\neg{A} \wedge B \wedge C_{in}) \\
        &= (A \wedge B) \vee (A \wedge C_{in}) \vee (\neg{A} \wedge B \wedge C_{in}) \\
        &= (A \wedge B) \vee \Big{(} C_{in} \wedge \big{(} (\neg{A} \wedge B) \vee (A \wedge \neg{B})\big{)} \Big{)}\\
        &= \big{(}A \wedge B\big{)} \vee \big{(}C_{in} \wedge (A \oplus B)\big{)} \\
\end{aligned}
\end{equation}
\end{footnotesize}
where the effect of the full adder in \cref{fig:full_adder:a} is that the gate $A \oplus B$ can be shared by $SUM$ and $C_{out}$.

\subsubsection{The laws under Boolean Matching}

\begin{table}[t]
\centering
\caption{The laws under Boolean matching.}
\begin{tabular}{cc}
\toprule
\thead{Boolean Matching Law} & \thead{Description~($\mathbf{S}$ is inputs/outputs node vector)}  \\
\toprule
Permutation Law  &  $\pi(\{\mathbf{S}\}) = \mathbf{S}'$ \\ \hline
Negation Law     &  $\nu(\mathbf{S}_i) = \neg \mathbf{S}_i$ \\ \hline
Invertible Law   &  \makecell{$\pi^{-1} \circ \pi = \iota$, \\ $\nu^{-1} \circ \nu = \iota$} \\
\bottomrule
\end{tabular}
\label{tab:bm_laws}
\vspace{-1.5em}
\end{table}

The laws under Boolean Matching mainly concentrate upon the negation function $\nu$ and permutation function $\pi$.
And \cref{tab:bm_laws} shows the three laws under Boolean matching.
Given an input/output nodes vector $\mathbf{S}$:
\begin{itemize}
    \item \textbf{permutation law}. According to \cref{def:perm}, we can get that ``$\pi(\{\mathbf{S}\}) = \mathbf{S}'$'' that $\mathbf{S}$ and $\mathcal{S}'$ have the same collection contents but different order~($\{\mathbf{S}\} == \{\mathbf{S}'\}$, yet $\exists i, \mathbf{S}_i \neq \mathbf{S}'_i$).
    \item \textbf{negation law}. The \cref{def:neg} shows that the element of $\mathbf{S}$ can be inverted that ``$\nu(\mathbf{S}_i) = \neg \mathbf{S}_i$''.
    \item \textbf{invertible law}. The invertible law is derived from the permutation and negation laws by their invertible property.
Thus, the inverse permutation function $\pi^{-1}$ and inverse negation function $\nu^{-1}$ can recover the original Boolean function by the operations $\pi^{-1} \circ \pi = \iota$ and $\nu^{-1} \circ \nu = \iota$~($\iota$ is defined at \cref{def:identity}).
The $\pi^{-1}$ function recovers the original order of $\mathbf{S}$ of the Boolean circuit, and $\nu^{-1}$ function recovers the phase or polarity of the nodes in $\mathbf{S}$.
\end{itemize}

\subsection{Analysis on the influence}
\label{sec:problem:influence}

\begin{figure}[t]\scriptsize
    \centering
    \subfigure[original full adder 1: \newline$SUM (A, B , C_{in}) = C_{in} \oplus (A \oplus B)$, \newline $C_{out}(A,B,C_{in})=\big{(}A \wedge B\big{)} \vee \big{(}C_{in} \wedge (A \oplus B)\big{)}$.]{ 
        \begin{minipage}[t]{0.22\textwidth}
        \centering
        \includegraphics[width=0.6\textwidth]{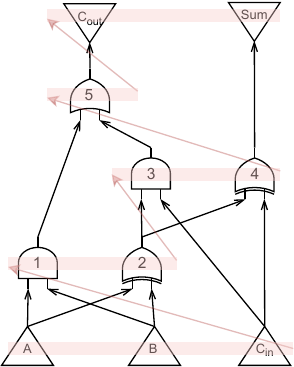}
        \end{minipage}
        \label{fig:full_adder:a}
    }
    \hfill
    \subfigure[original full adder 2: \newline$SUM(A,B,C_{in})=C_{in} \oplus (A \oplus B)$, \newline $C_{out}(A,B,C_{in})=\big{(}A \wedge B\big{)} \vee \big{(} C_{in} \wedge (A \vee B)\big{)}$.]{ 
        \begin{minipage}[t]{0.22\textwidth}
        \centering
        \includegraphics[width=0.6\textwidth]{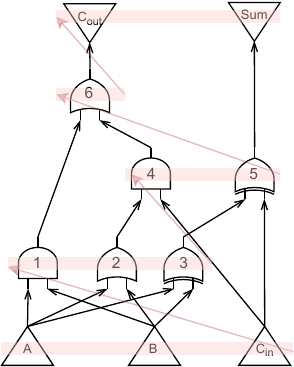}
        \end{minipage}
        \label{fig:full_adder:b}
    }
    \subfigure[permuted full adder 1: \newline$SUM(C_{in},A,B)=C_{in} \oplus (A \oplus B)$, \newline $C_{out}(C_{in},A,B)=\big{(}A \wedge B\big{)} \vee \big{(}C_{in} \wedge (A \oplus B)\big{)}$.]{ 
        \begin{minipage}[t]{0.22\textwidth}
        \centering
        \includegraphics[width=0.6\textwidth]{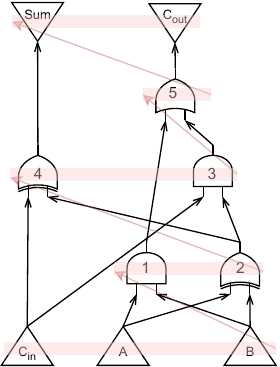}
        \end{minipage}
        \label{fig:full_adder:c}
    }
    \hfill
    \subfigure[permuted full adder 2: \newline$SUM(C_{in},A,B)=C_{in} \oplus (A \oplus B)$, \newline $C_{out}(C_{in},A,B)=\big{(}A \wedge B\big{)} \vee \big{(} C_{in} \wedge (A \vee B)\big{)}$.]{ 
        \begin{minipage}[t]{0.22\textwidth}
        \centering
        \includegraphics[width=0.6\textwidth]{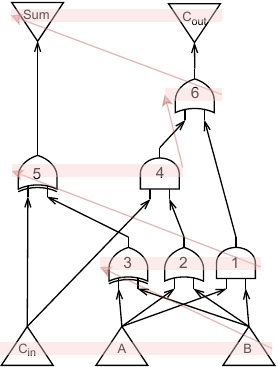}
        \end{minipage}
        \label{fig:full_adder:d}
    }
    \subfigure[negated full adder 1: \newline$SUM(A,B,C_{in})= \neg\big{(}C_{in} \oplus (A \oplus \neg{B})\big{)}$, \newline $C_{out}(A,B,C_{in})=\big{(}A \wedge \neg{B}\big{)} \vee \big{(}C_{in} \wedge (A \oplus \neg{B})\big{)}$.]{ 
        \begin{minipage}[t]{0.22\textwidth}
        \centering
        \includegraphics[width=0.6\textwidth]{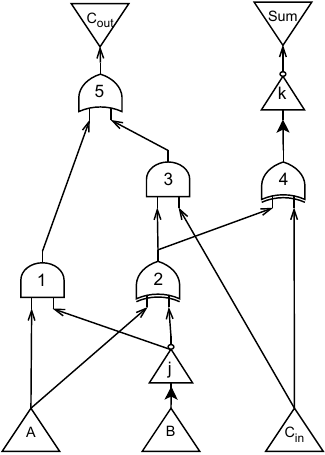}
        \end{minipage}
        \label{fig:full_adder:e}
    }
    \hfill
    \subfigure[negated full adder 2: \newline$SUM(A,B,C_{in})=\neg\big{(}C_{in} \oplus (A \oplus \neg{B})\big{)}$, \newline $C_{out}(A,B,C_{in})=\big{(}A \wedge \neg{B}\big{)} \vee \big{(} C_{in} \wedge (A \vee \neg{B})\big{)}$.]{ 
        \begin{minipage}[t]{0.22\textwidth}
        \centering
        \includegraphics[width=0.6\textwidth]{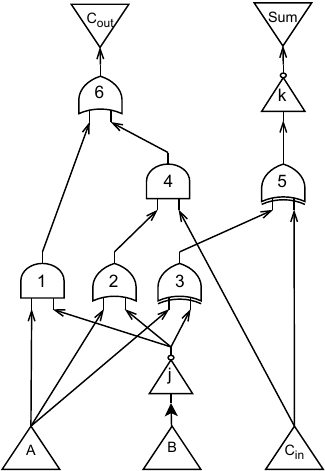}
        \end{minipage}
        \label{fig:full_adder:f}
    }
    \caption{The visualization of the full adder under the Boolean transformations. \textbf{Logic optimization}: (a)$\leftrightarrow$(b); \textbf{Permutation}: (a)$\leftrightarrow$(c), and (b)$\leftrightarrow$(d); \textbf{Negation}: (a)$\leftrightarrow$(e), and (b)$\leftrightarrow$(f); \textbf{Negation$\oplus$Permutation}: (c)$\leftrightarrow$(e), and (d)$\leftrightarrow$(f).}
    \label{fig:full_adder}
\end{figure}

\subsubsection{Boolean Matching change Boolean Circuits}

\paragraph{Permutation $\pi$ changes Topological Order.}
The permutation transformation $\pi$ on Boolean circuits is mainly implemented by changing the position of the input/output nodes.
Then, the topological order will also be permutated by the modification.
In the analysis of two commonly used topological order methods: depth-first-search~(DFS), and breadth-first-search~(BFS).

\begin{table}[t]
\centering
\caption{The topological order of Boolean circuits in \cref{fig:full_adder} by BFS method.}
\begin{tabular}{c|c}
\toprule
Circuits Location & Topological Order \\
\midrule
\cref{fig:full_adder:a} & $A, B, C_{in}, 1, 2, 3, 4, 5, C_{out}, SUM$ \\
\cref{fig:full_adder:c} & $C_{in}, A, B, 1, 2, 4, 3, 5, SUM, C_{out}$  \\
\cref{fig:full_adder:e} & $A, B, C_{in}, j, 1, 2, 3, 4, 5, k, C_{out}, SUM$  \\
\midrule
\cref{fig:full_adder:b} & $A, B, C_{in}, 1, 2, 3, 4, 5, 6, C_{out}, SUM$  \\
\cref{fig:full_adder:d} & $C_{in}, A, B, 3, 2, 1, 4, 5, 6, SUM, C_{out}$  \\
\cref{fig:full_adder:f} & $A, B, C_{in}, j, 1, 2, 4, 3, 5, 6, k, C_{out}, SUM$  \\
\bottomrule
\end{tabular}
\label{tab:bfs}
\end{table}

And \cref{fig:full_adder:a}$\leftrightarrow$\cref{fig:full_adder:c} and \cref{fig:full_adder:b}$\leftrightarrow$\cref{fig:full_adder:d} show the permutation result of the full adder.
The red line depicts the topological order for each full adder by the BFS algorithm.
The \cref{tab:bfs} shows the detailed topological orders in \cref{fig:full_adder} in the BFS method, and results based on DFS are also modified and different.

\paragraph{Negation $\nu$ modifies circuits.}
The negation transformation $\nu$ on Boolean circuits is mainly implemented by adding or removing the ``Inverter'' gate on PIs/POs.
In terms of the result, the Boolean Circuit's size or depth can be affected.
However, the relative position of all nodes in the original Boolean circuit in the topology sequence will not change for a deterministic topology ordering algorithm.

The \cref{fig:full_adder:a}$\leftrightarrow$\cref{fig:full_adder:e} and \cref{fig:full_adder:b}$\leftrightarrow$\cref{fig:full_adder:f} show the negation on the node $B$ and $SUM$.
From the direction of \cref{fig:full_adder:a}$\rightarrow$\cref{fig:full_adder:e}, the inverter nodes ``$j$ and $k$'' are added; while they are removed from the direction \cref{fig:full_adder:e}$\rightarrow$\cref{fig:full_adder:a}.
Furthermore, nodes apart from the added/removed inverter nodes will keep the original relative position in the topological order.

\subsubsection{Boolean Matching expands the equivalent class}

The combination of logic optimization and Boolean matching operations can further enlarge the search space of a Boolean circuit.
In the following, we will give the theorem about the Boolean matching expands the equivalent class space.

\begin{theorem}
The invertible law of Boolean Matching will allow the logic-nonequivalent operations of negation and permutation to be used in the logic-equivalent laws.
\label{thm:local_replacement}
\end{theorem}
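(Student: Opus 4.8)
The plan is to establish the theorem by a conjugation (or ``sandwich'') argument that exploits the invertible law of \cref{tab:bm_laws}. The guiding idea is that although a single negation $\nu$ or permutation $\pi$ is logic-nonequivalent---it relabels or re-polarizes the primary inputs/outputs and thereby moves a circuit into a \emph{different} logic-equivalent class---the composite $\phi^{-1} \circ L \circ \phi$, where $\phi \in \{\pi, \nu\}$ and $L$ is any logic-equivalent law drawn from \cref{tab:le_laws}, is itself logic-equivalent. Thus $\nu$ and $\pi$ may be legitimately interleaved with the logic-equivalent laws provided they are eventually undone, which is exactly what the invertible law guarantees.

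First I would fix an arbitrary Boolean circuit $\mathcal{C}$ and apply $\phi$ to obtain $\mathcal{C}' = \phi(\mathcal{C})$. By \cref{def:perm} and \cref{def:neg}, $\mathcal{C}'^{F} \not\equiv \mathcal{C}^{F}$ in general, so $\mathcal{C}'$ sits in another logic-equivalent class. Next I would apply a logic-equivalent law $L$ to $\mathcal{C}'$. Because every entry of \cref{tab:le_laws} is a tautological identity holding for all Boolean assignments, it is insensitive to the particular relabeling or re-polarization carried by $\phi$; hence $L(\mathcal{C}')^{F} \equiv \mathcal{C}'^{F}$ by \cref{def:logic_equivalence}, and $L$ changes only the DAG $\mathcal{C}'^{G}$. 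Finally I would apply $\phi^{-1}$ and invoke the invertible law $\phi^{-1} \circ \phi = \iota$ together with \cref{def:identity}, which yields
\begin{equation}
\nonumber
(\phi^{-1} \circ L \circ \phi)(\mathcal{C})^{F} \equiv (\phi^{-1} \circ \phi)(\mathcal{C})^{F} = \iota(\mathcal{C})^{F} = \mathcal{C}^{F}.
\end{equation}
Therefore the whole composite preserves the Boolean function and is logic-equivalent, establishing that the logic-nonequivalent operations $\nu$ and $\pi$ have been soundly used inside a logic-equivalent transformation. To close the argument I would note that the structural footprint of $\phi^{-1} \circ L \circ \phi$ on $\mathcal{C}^{G}$ need not coincide with that of any single law in \cref{tab:le_laws}: the detour through the neighbouring class $\mathcal{C}'$ exposes rewrites that are unreachable by the logic-equivalent laws alone, which is precisely how Boolean matching expands the equivalent class, consistent with \cref{fig:boolean_transformation}.

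I expect the main obstacle to be the middle step: verifying that an abstract law $L$, stated for symbolic variables $A$, $B$, $C$, remains valid after $\phi$ has acted and that $\phi^{-1}$ cancels the function change \emph{exactly}. For permutation this reduces to the invariance of each law under renaming of variables, which is immediate; for negation it is more delicate, since one must track polarities through the gate and rely on the double-negation and De Morgan laws of \cref{tab:le_laws} to push each inverter consistently so that $\nu^{-1}$ restores the original phase. Making this commutation precise---rather than merely plausible---is the crux of the proof.
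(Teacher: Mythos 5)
Your proposal is correct and rests on the same key idea as the paper's proof: the invertible law $\pi^{-1}\circ\pi=\iota$, $\nu^{-1}\circ\nu=\iota$ cancels the function change introduced by negation/permutation, so the overall transformation stays within the logic-equivalent class. If anything, your conjugation formulation $\phi^{-1}\circ L\circ\phi$ is a sharper rendering of the same argument --- the paper's proof only states that applying $\nu^{-1}_I,\nu^{-1}_O$ (resp.\ $\pi^{-1}_I,\pi^{-1}_O$) recovers the original state, without explicitly interleaving a logic-equivalent law $L$ between the forward and inverse transformations.
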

\begin{proof}[Proof of \cref{thm:local_replacement}]
For any given Boolean circuit $\mathcal{C}$, its input nodes $I_S$ and output nodes $O_S$, it is trivial that logic optimization operations indeed generate the logic-equivalent class.
However, it will generate logic-nonequivalent classes for the Boolean matching operations.
For the negation operation: the logic of the input and output nodes will changed to $\nu_I(I_S)$ and $\nu_O(O_S)$.
However, we can restore or recomputed the invertible functions $\nu^{-1}_I$ and $\nu^{-1}_O$ to recover the original state by $\nu_I(I_S) \circ \nu^{-1}_I(I_S)$ and $\nu_O(O_S) \circ \nu^{-1}_O(O_S)$.
And it is similar for the permutation operation to recover the original state by using the $\pi^{-1}_I$ and $\pi^{-1}_O$.
\end{proof}

The \cref{thm:local_replacement} provides the theoretical support that the logic-nonequivalent operations of negation and permutation can be applied to the logic equivalent operations fields, such as logic optimization, technology mapping, engineer change order, \textit{etc}.
It allows the equivalent local replacement of the Boolean circuits with the equivalence under Boolean matching after applying the corresponding inverse functions.

\begin{figure}[t]
    \centering
    \subfigure[$f=x_1x_2 + x_3$.]{ 
        \begin{minipage}[t]{0.22\textwidth}
        \centering
        \includegraphics[width=0.78\textwidth]{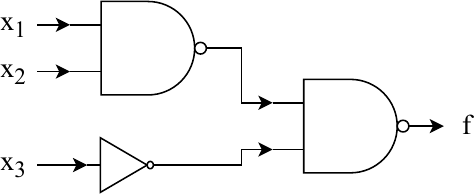}
        \end{minipage}
        \label{fig:npn:a}
    }
    \hfill
    \subfigure[$g=\neg{x_1} + x_2\neg{x_3}$.]{ 
        \begin{minipage}[t]{0.22\textwidth}
        \centering
        \includegraphics[width=0.98\textwidth]{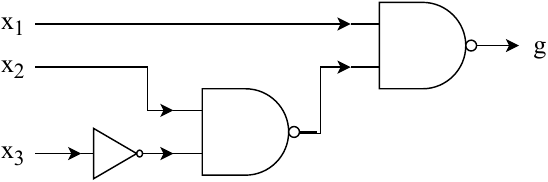}
        \end{minipage}
        \label{fig:npn:b}
    }
    \caption{Two NPN-equivalent functions $f$ and $g$.}
    \label{fig:npn}
    \vspace{-1em}
\end{figure}

And \cref{fig:npn} shows two Boolean matching functions $f$ and $g$ are NPN-equivalent under and permutation $\pi_I(x_1, x_2, x_3) = {x_3, x_2, x_1}$ and negation $\nu_I(x_1, x_2, x_3) = {\neg{x_1}, x_2, \neg{x_3}}$, thus, $g(x_1, x_2, x_3) = f(\neg{x_3}, x_2, \neg{x_1})$.

Hence, the Boolean circuits under the Boolean matching operations can also be logic-equivalent according to \cref{thm:local_replacement}.
It enlarges the equivalent class.
Given an $n$-inputs and $m$-outputs Boolean circuit $C$, and assuming that the size of its logic equivalent class after logic optimization operations is $N$,
the space can be expanded to $(2^n \cdot n! \cdot 2^m \cdot m!) \cdot N$ at most.
In the meantime, the transformed Boolean circuit by Boolean matching can also apply the logic optimization operations for other design space exploration.

\subsection{Problem Formulation}

\begin{definition}[Matching Equivalent Class]
The matching equivalent class of Boolean circuits refers to the group of Boolean circuits that any given two Boolean circuits $\mathcal{C}_1$ and $\mathcal{C}_2$ can be logic equivalent under logic optimization and Boolean matching transformations.
And \cref{fig:boolean_transformation} and \cref{fig:domain} give the domain and illustration of the matching equivalent class.
\label{def:class}
\end{definition}

\begin{problem}[Boolean-aware Boolean Circuits Classification]
\label{prob:booleanClassification}
Given a set of Boolean circuits $\{\mathcal{C}_i\}_{i=1}^n$, classify the circuits into $k~(k\le n)$ classes, and each class follows the property of the defined matching equivalent class.
\end{problem}

\begin{theorem}
The matching equivalent class is closed under the logic equivalent and Boolean matching laws.
\label{thm:closure}
\end{theorem}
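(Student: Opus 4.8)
The plan is to show that ``matching equivalence'' is an equivalence relation and that each defining law maps a circuit to another member of the same class, so that the class is invariant under the laws. By \cref{def:class}, two circuits lie in the same matching-equivalent class precisely when one can be reached from the other by a finite interleaving of logic-equivalent rewrites (\cref{tab:le_laws}) and Boolean-matching transformations (\cref{tab:bm_laws}). I would therefore model the set of admissible transformations as the group $\mathcal{G}$ generated by the permutations $\pi$, the negations $\nu$, and the logic-equivalent laws, and identify each matching-equivalent class with an orbit of the induced action of $\mathcal{G}$ on Boolean circuits. Closure of the class then follows from the standard fact that group orbits partition the underlying set.

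First I would verify the three group axioms on the generators. The identity $\iota$ of \cref{def:identity} supplies reflexivity. Invertibility is exactly the invertible law of \cref{tab:bm_laws}: $\pi^{-1}\circ\pi=\iota$ and $\nu^{-1}\circ\nu=\iota$, while each logic-equivalent law in \cref{tab:le_laws} is stated as a bi-implication ($\Leftrightarrow$) and is therefore its own inverse, so the relation is symmetric. Closure under composition gives transitivity: the composition of two permutations is again a permutation, the composition of two negations is again a negation, and a chain of logic-equivalent rewrites is again a logic-equivalent rewrite by \cref{def:logic_equivalence}. Assembling these, any finite word in the generators is an admissible transformation, which is precisely transitivity of matching equivalence.

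Next I would establish the closure claim itself. Taking any $\mathcal{C}$ in a fixed class and applying a single law, if it is a logic-equivalent law then $\mathcal{C}\equiv\mathcal{C}'$, so by transitivity $\mathcal{C}'$ is matching equivalent to every prior member; if it is a permutation or negation law then $\mathcal{C}\circeq\mathcal{C}'$ in the sense of \cref{def:boolean_matching}, and the same conclusion holds. The crucial ingredient that lets the two transformation families be freely interleaved is \cref{thm:local_replacement}: after applying a negation or permutation one can insert the corresponding inverse $\nu^{-1}$ or $\pi^{-1}$ to recover the original functionality locally, so a Boolean-matching step never forces the circuit out of the orbit reachable by logic-equivalent rewrites.

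The hard part will be making this interleaving rigorous rather than cosmetic. Permutations and negations act on the input/output boundary whereas logic-equivalent rewrites act on the internal DAG, and these two actions do not commute in general; I must check that composing them in arbitrary order still yields a transformation expressible as a single element of $\mathcal{G}$, i.e.\ that $\mathcal{G}$ is genuinely closed and the orbit is well defined independently of the order of operations. Concretely, I would show that conjugating a logic-equivalent rewrite by a boundary transformation, such as $\nu^{-1}\circ(\text{rewrite})\circ\nu$, is again an admissible rewrite, using \cref{thm:local_replacement} to certify functional recovery. Once this conjugation-closure is verified, the orbit/partition argument completes the proof.
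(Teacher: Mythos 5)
Your proposal is correct in substance and reaches the same conclusion, but it takes a noticeably more formal route than the paper. The paper's proof is a direct two-case check on a single representative $\mathcal{C}_1\in\mathbf{S}$: (1) a logic-optimization step preserves $\mathcal{C}^{F}$, so the result stays in $\mathbf{S}$; (2) a subsequent $\nu\circ\pi$ step stays in $\mathbf{S}$ because $\nu^{-1}\circ\pi^{-1}$ recovers the original function --- exactly the invertible law of \cref{tab:bm_laws} that you also invoke. You instead organize the argument as an orbit argument: treat the laws as generators of a group acting on circuits, verify identity, inverses, and closure under composition, and conclude that the classes are orbits and hence invariant. The essential ingredient is identical in both proofs, namely invertibility of $\pi$ and $\nu$ via \cref{thm:local_replacement}, so the two arguments are the same idea at different levels of rigor. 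What your framing buys is coverage of the cases the paper silently skips: the paper only examines the canonical order ``logic optimization, then $\nu\circ\pi$,'' whereas \cref{def:class} permits arbitrary finite interleavings, and your conjugation-closure step (that $\nu^{-1}\circ(\text{rewrite})\circ\nu$ is again admissible) is precisely what is needed to reduce an arbitrary interleaving to that canonical order. Two caveats: you flag that step as still to be verified rather than carrying it out, and calling the generated structure a ``group'' is slightly loose, since a local rewrite is only a partial map on circuits (a groupoid, or the generated equivalence relation, is the cleaner object). Neither point is fatal; under the reachability reading of \cref{def:class} the closure statement is nearly immediate, which is why the paper's terser proof is also acceptable.
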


\begin{proof}[Proof of \cref{thm:closure}]
For any given Boolean circuit $\mathcal{C}_1$ and the matching equivalent class $\mathbf{S}$~($\mathcal{C}_1 \in \mathbf{S}$): 
1) Closure under logic equivalence: for another Boolean circuit $\mathcal{C}_2$ that $\mathcal{C}_1 \overset{logic\ optimization}{\rightarrow} \mathcal{C}_2$ with $\mathcal{C}^{F}_1 \equiv \mathcal{C}^{F}_2$, then, $\mathcal{C}_2 \in \mathbf{S}$;
2) Closure under Boolean matching laws: for another Boolean circuit $\mathcal{C}_2$ that $\mathcal{C}_1 \overset{logic\ optimization}{\rightarrow} \mathcal{C}_2 \overset{\nu \circ \pi}{\rightarrow} \mathcal{C}_3$ with $\mathcal{C}^{F}_1 \equiv \mathcal{C}^{F}_2~and~\mathcal{C}^{F}_2 \overset{\nu^{-1} \circ \pi^{-1}}{=} \mathcal{C}^{F}_3$, then $\mathcal{C}_3 \in \mathbf{S}$.
Therefore, the \cref{thm:closure} is proved.
\end{proof}

\subsection{Challenges}
\paragraph{The Disturbance by Boolean equivalence.}
Boolean-equivalence asserts Boolean circuits with the same Boolean function can have different graph structures.
The depth and size all can differ from each other.
This brings the challenge of learning a consistent representation of different structures.

\paragraph{The modification by Boolean matching operations.}
Then permutation $\pi$ can disturb the topological order of the Boolean circuit, while the negation $\nu$ changes the Boolean function.
This brings the challenge of considering $\pi$ and $\nu$ for the classification.



\begin{figure}[t]
    \centering
    \includegraphics[width=0.5\textwidth]{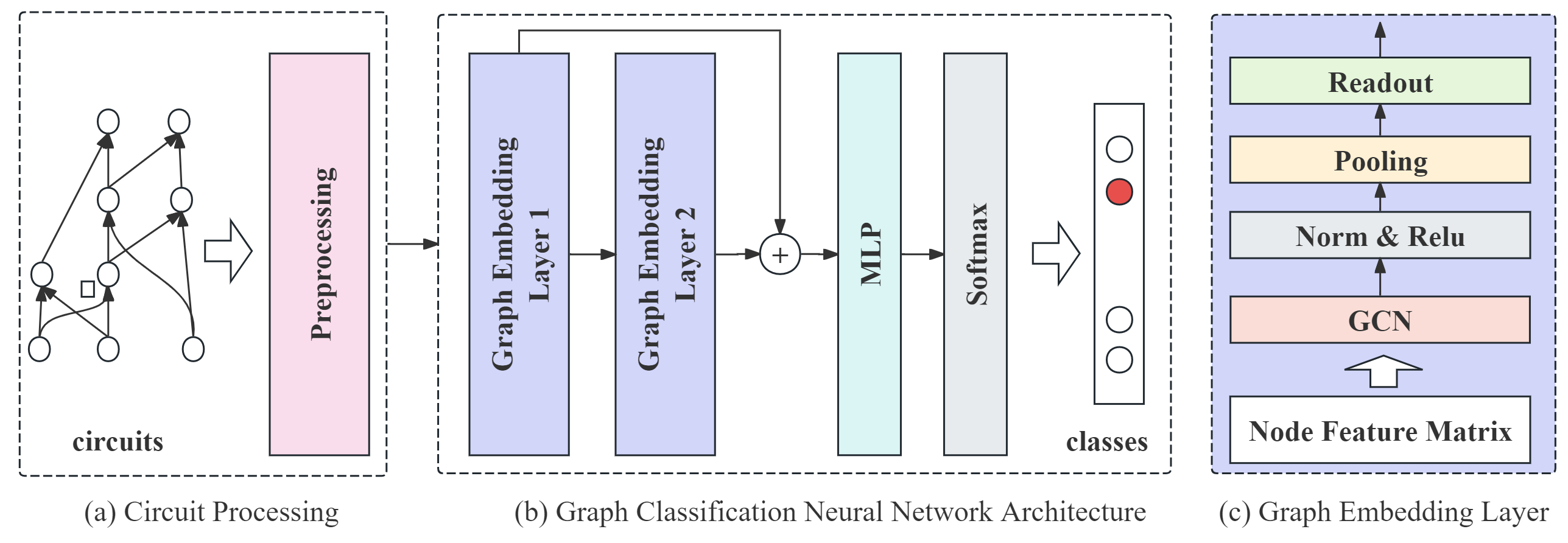}
    \caption{The proposed Boolean Circuit Classification Framework.}
    \label{fig:framework}
\end{figure}

\section{The Study Framework}
\label{sec:study}
In this section, we give a comprehensive study of the proposed problem on the graph neural network according to \cref{sec:problem}.

\subsection{Overview}
The \cref{fig:framework} depicts the proposed framework to solve the Boolean-aware Boolean circuit classification problem.
It mainly consists the two parts: (1) Boolean circuit processing; and (2) Graph Classification based on a graph neural network.
The following introductions are all focused on the alignment in the processing of Boolean-aware Boolean circuit classification.
We will not delve into the intricacies of GNN depth; instead, we will focus on the standard two-layer GNN model.


\subsection{The Power of Boolean Circuit Preprocessing}
\label{sec:study:preprocessing}

\begin{figure}[t]
    \centering
\vspace{-1em}
    \includegraphics[width=0.4\textwidth]{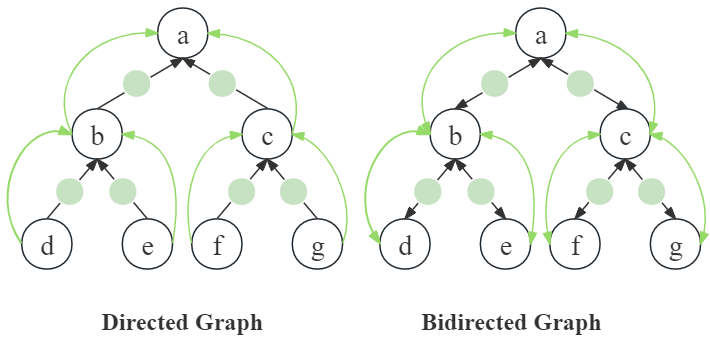}
    \caption{Aggreation direction on the two graph types.}
    \label{fig:graph_type}
    \vspace{-1em}
\end{figure}

\begin{figure}[t]
\centering
    \subfigure[Sub-circuit with Inverter.]{ 
        \begin{minipage}[t]{0.22\textwidth}
        \centering
        \includegraphics[width=0.4\textwidth]{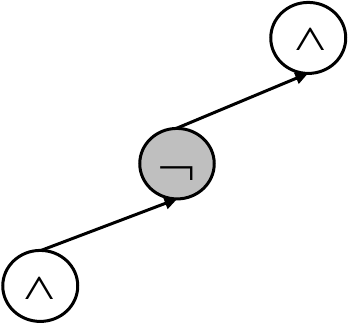}
        \end{minipage}
    }
    \hfill
    \subfigure[Sub-circuit without Inverter.]{ 
        \begin{minipage}[t]{0.22\textwidth}
        \centering
        \includegraphics[width=0.4\textwidth]{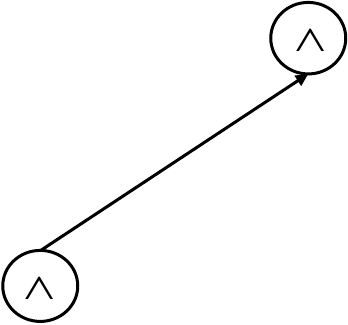}
        \end{minipage}
    }
    \caption{The illustrations that dropping the 2-degree nodes will not change the topology.}
    \label{fig:drop_inverter}
\end{figure}

In general, the Boolean circuit preprocessing step can produce graphs that are easier to handle.
It may greatly improve the accuracy and efficiency of the given task.
And we can list some techniques for the preprocessing:
\begin{itemize}
    \item The \cref{fig:graph_type} illustrates the directed Boolean circuit's graph and its bidirectional version. The direction of the edge can affect the node-embedding of the graph for graph learning tasks~(\cref{sec:study:GNN}). The bidigraph allows the aggregation of the output's direction for fanouts and siblings, while the digraph requires more on the GCN architecture designs.
    \item Dropping the 2-degree nodes. According to the graph theory~\cite{graph_degree_2005}, the 2-degree nodes can not affect the topology of graphs, and \cref{fig:drop_inverter} illustrates this case.
\end{itemize}

\subsection{The Power of the Graph Neural Network~(GNN)}
\label{sec:study:GNN}
The objective of GNN is to learn node features $\mathbf{X}$ by a permutation invariant function $\phi(\mathbf{X}_u, \mathbf{X}_{\mathcal{N}_{u}})$ over local neighborhoods on the adjacency matrix $\mathbf{A}$ for its downstream tasks.

\subsubsection{Node Embedding}

The node embedding aims at capturing and representing the feature information of nodes within a graph.
It uses the following three approaches to aggregate each node feature: 
1) \textbf{convolutional}, where sender node features are multiplied with a constant:
$h_u = \phi{\big{(} \mathbf{X}_u, \underset{v\in \mathcal{V}}{\oplus} c_{uv}\psi(\mathbf{X}_v) \big{)}}$;
2) \textbf{attentional}, where this multiplier is implicitly computed via an attention mechanism of the receiver over the sender:
$h_u = \phi{\big{(} \mathbf{X}_u, \underset{v\in \mathcal{V}}{\oplus} \alpha(\mathbf{X}_u, \mathbf{X}_v)\psi(\mathbf{X}_v) \big{)}}$;
and 3) \textbf{message-passing}, where vector-based messages are computed based on both the sender and receiver:
$h_u = \phi{\big{(} \mathbf{X}_u, \underset{v\in \mathcal{V}}{\oplus} \psi(\mathbf{X}_u, \mathbf{X}_v) \big{)}}$.
Typically, $\phi$ and $\psi$ are learnable, whereas $\oplus$ is realized as a nonparametric operation such as sum and mean.

\subsubsection{Graph Pooling~(Graph Reduction)}
Graph pooling, also known as graph reduction, refers to the process of reducing the size of the graph by selectively retaining ``important" nodes and their connections. This is critical in making graph neural networks (GNNs) more manageable and computationally efficient, especially for large graphs. Common graph pooling techniques include Top-K-based pooling, which selects nodes based on their scores computed by a learnable neural network, and cluster-based pooling, which groups nodes into clusters to form a coarsened graph.

\subsubsection{Graph Embedding~(Readout) \& Classification}
The graph embedding or readout layer is used to generate a whole-graph representation from node features, which is essential for tasks that require a global graph-level output, such as graph classification or graph regression. This is typically achieved through aggregation operations such as summing, averaging, or taking the maximum of node features across the entire graph. Advanced techniques might incorporate attention mechanisms to weigh node contributions dynamically.

The classification stage typically follows the graph embedding stage and uses the graph-level representation to classify the entire graph into one of several categories. This is usually implemented using a multi-layer perceptron (MLP) followed by a softmax function. The MLP captures non-linear relationships in the graph representation, and the softmax function outputs the probabilities of the graph belonging to each class.

\begin{table}[t]
\centering
\vspace{-1em}
\footnotesize
\caption{The characteristics of the selected data.}
\begin{tabular}{c|cccc}
\toprule
\textbf{Design} & \#\textbf{PIs} & \#\textbf{POs} & \#\textbf{Nodes} & \#\textbf{Depth} \\
\midrule
\textit{ctrl}      &    7     &    26    &    174    &    10          \\
\textit{comp}      &    32    &    3     &    173    &    24          \\
\textit{unreg}     &    36    &    16    &    158    &    8           \\
\textit{stepper\_motor\_drive}&    28    &    27     &    180    &    10        \\
\textit{count}     &    35    &    16    &    192    &    26          \\
\textit{s510}      &    25    &    15    &    221    &    10          \\
\textit{int2float} &    11    &    7     &    260    &    16          \\
\textit{router}    &    60    &    30    &    257    &    54          \\
\textit{cht}       &    47    &    36    &    271    &    9           \\
\textit{ttt2}      &    24    &    21    &    351    &    14          \\
\bottomrule
\end{tabular}
\label{tab:datasets}
\vspace{-1em}
\end{table}

\subsection{The effort on the proposed problem}
Benefiting from the theoretical permutation-invariant operations in the node embedding step of GNN, the permutation function $\pi$ on Boolean circuits can be well studied by this architecture according to \cref{def:perm}.
The following graph embedding and MLP-based classification steps are also less affected.
While the negation function $\nu$ can affect the node embedding results.
Thus, the N-related operations in \cref{fig:domain} need further study and learning by preprocessing or new negation-aware node embedding algorithms.

\section{Experiments}
\label{sec:experiment}

\subsection{Data preparation}

\begin{figure}[t]
    \centering
    \vspace{-1em}
    \includegraphics[width=0.5\textwidth]{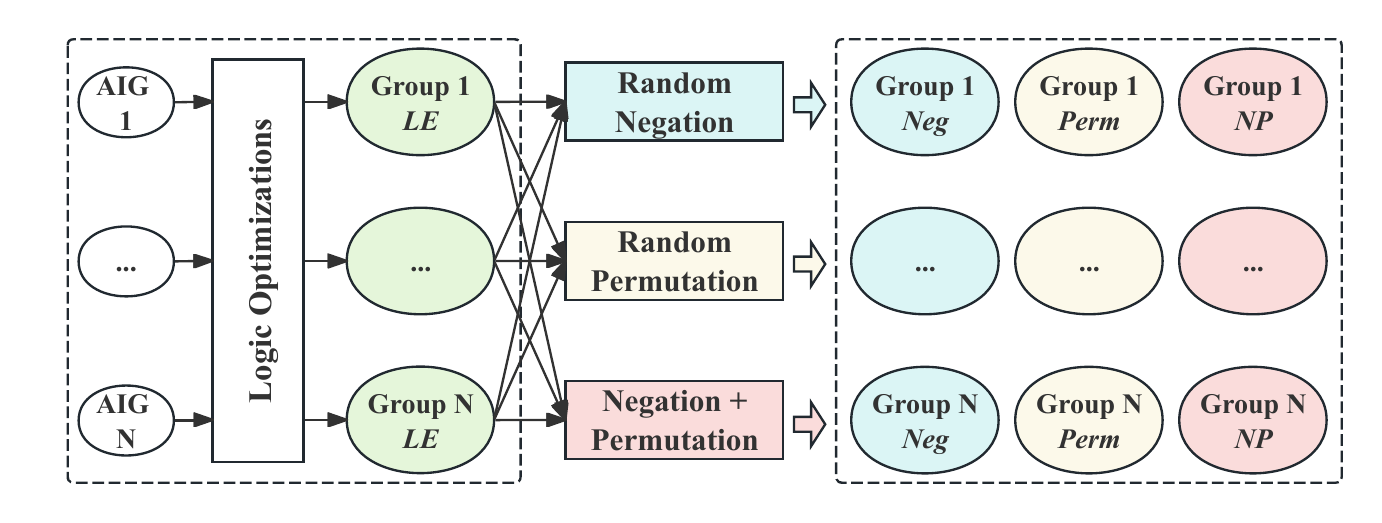}
    \caption{The illustration for the dataset generation flow. The ``LE'', ``Neg'', ``Perm'', and ``NP'' refer to the logic equivalent group, negation group, permutation group, and negative $\oplus$ permutation group, respectively. }
    \label{fig:data_flow}
\end{figure}

\begin{figure}[t]
    \centering
    \vspace{-1em}
    \includegraphics[width=0.4\textwidth]{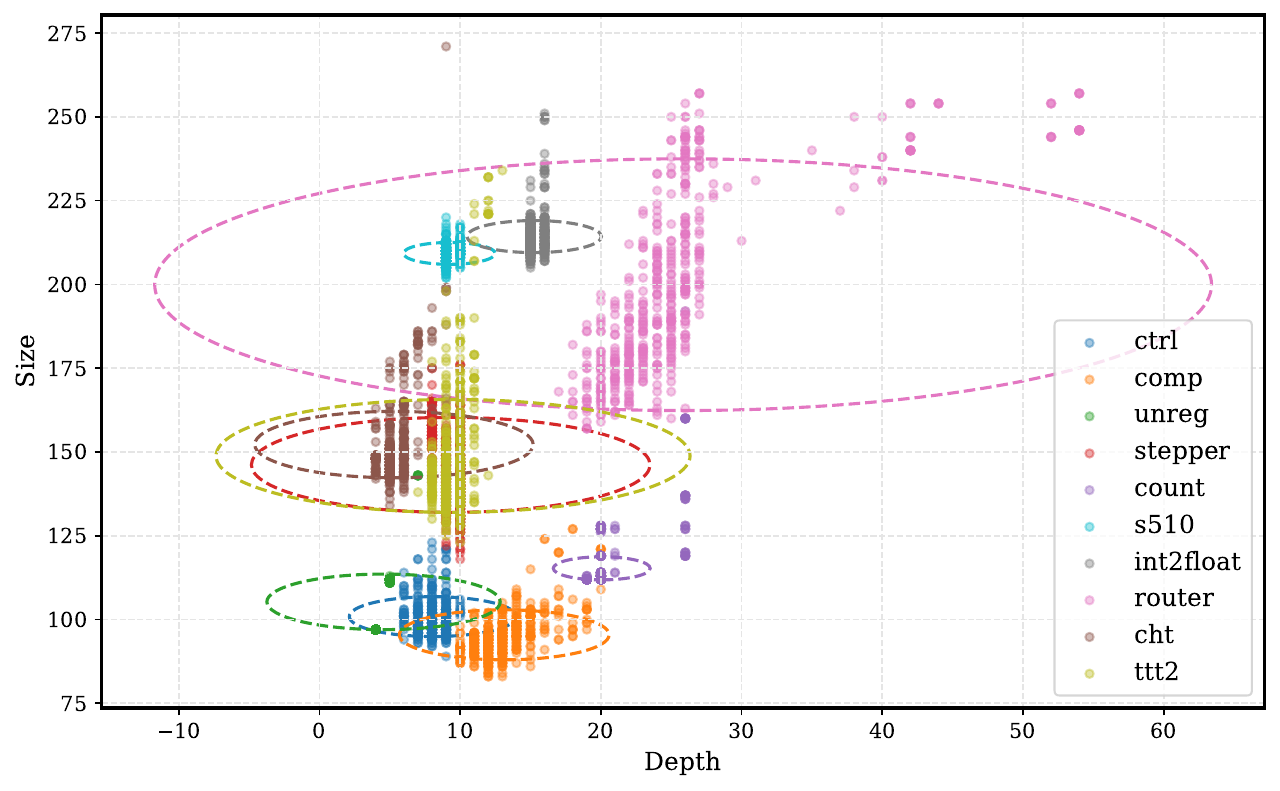}
    \caption{Size and depth distribution of the synthesised LE~(Logic Equivalent) groups. Each circle covers the 80\% of the designs of each group.}
    \label{fig:group}
    \vspace{-1em}
\end{figure}

The dataset\footnote{Compared with OpenABC-D~\cite{openabcd_animesh21}, the selected dataset is more closed.} was selected from multiple open combinational Boolean circuit benchmarks such as ISCAS85~\cite{ISCAS85}, ISCAS89~\cite{ISCAS89}, MCNC91~\cite{MCNC91}, IWLS2005~\cite{IWLS2005}, and EPFL~\cite{IWLS2015}. 
All of the data is in AIG format.
And \cref{tab:datasets} lists the detailed information on the source open benchmark of the Boolean circuit in AIG format.

The dataset generation flow is illustrated in \cref{fig:data_flow}. 
For each design in AIG format, the logic-equivalent class ``Group" is generated first through random\footnote{The ``random'' means the different optimization operators and length.} logic optimization. 
Next, for each AIG in ``Group", the AIGs with negation, permutation, and negation plus permutation are transformed and stored in their corresponding groups.
And \cref{fig:group} shows the size and depth distribution of the generated logic equivalent groups for the logic optimization step.
It reveals the overlap between different design groups.


\subsection{Environment and Configuration}


The code is written in Python language with the Pytorch package.
All procedures run on an Intel(R) Xeon(R) Platinum 8260 CPU with 2.40GHz, 24 cores, and 128GB RAM, NVIDIA A100 GPU.
The hyperparameters of GNN are as follows:
\textit{feature dimension}: 10;
\textit{hidden dimension}: 64;
\textit{learning rate}: 0.001;
\textit{weight decay}: 1e-5;
\textit{batch size}: 16.
We use the cross entropy loss function for this multi-classification task.

\subsection{Results Evaluation}
All the evaluations in this subsection are run for 100 epochs.

\subsubsection{Node embedding capacity of the Boolean circuit graph type}

\begin{figure}[t]
    \centering
    \includegraphics[width=0.45\textwidth]{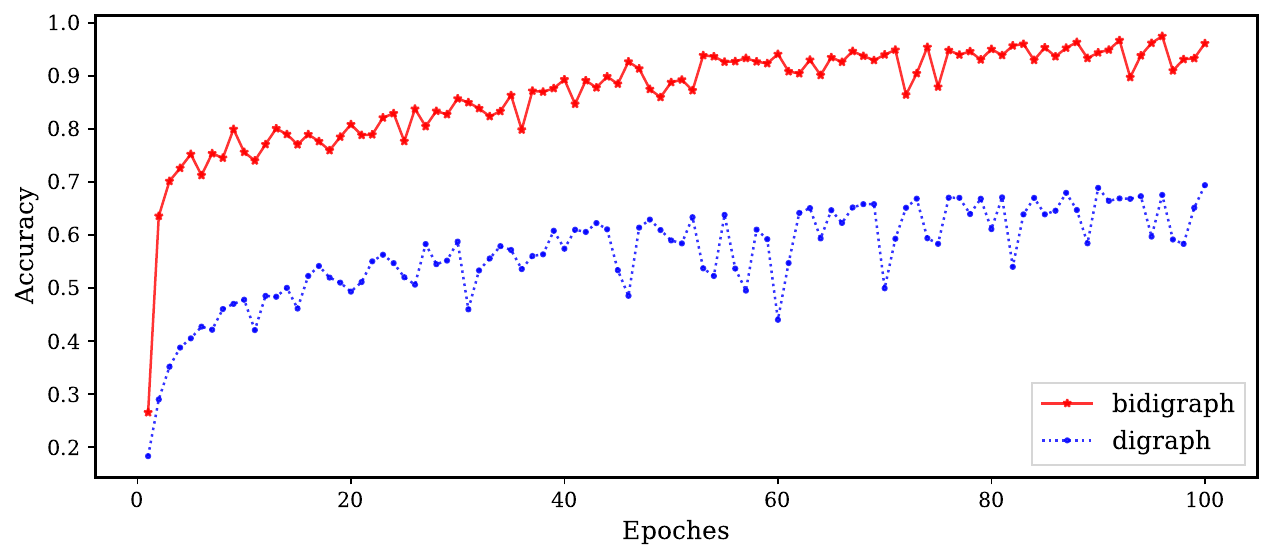}
    \caption{The accuracy curves for the capacity comparison of the digraph and bidigraph for the node embedding on the ``LE'' dataset by GCN model.}
    \label{fig:eval_graph}
\end{figure}

The \cref{fig:eval_graph} illustrates the accuracy curves of the digraph and bidigraph on the GCN architecture.
It shows that even the Boolean circuit is a directed graph, however, the bidirectional modification can also let the nodes capture more information for its sibling nodes of fanout nodes.

\subsubsection{Graph Pooling technology improve Semantic abstraction}

\begin{figure}[t]
    \centering
    \includegraphics[width=0.45\textwidth]{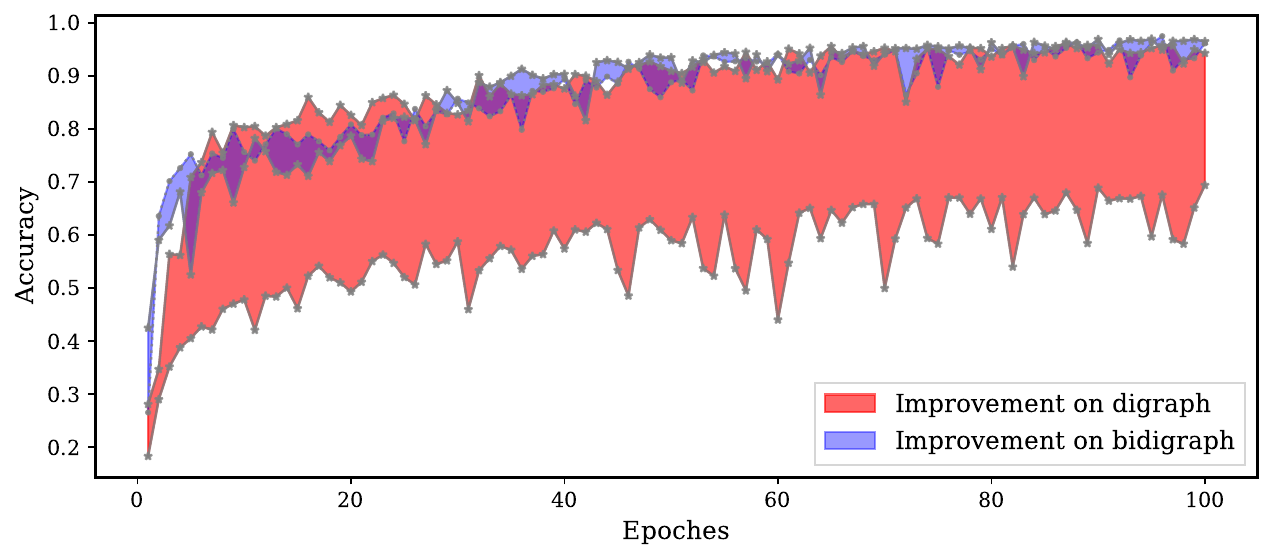}
    \caption{The accuracy curves for the capacity comparison of GCN and GCN+Pooling.}
    \label{fig:eval_pooling}
\vspace{-0.5cm}
\end{figure}
The \cref{fig:eval_pooling} shows the improvements of the graph pooling technology on the ``LE'' dataset, and the red and blue covered area are the improvement on digraph and bidigraph, respectively.
With the integrated graph pooling layer, the more abstracted semantic information can be captured for the Boolean circuit.

\subsubsection{Permutation V.S. Negation}

\begin{figure}[h]
    \centering
    \vspace{-1em}
    \includegraphics[width=0.45\textwidth]{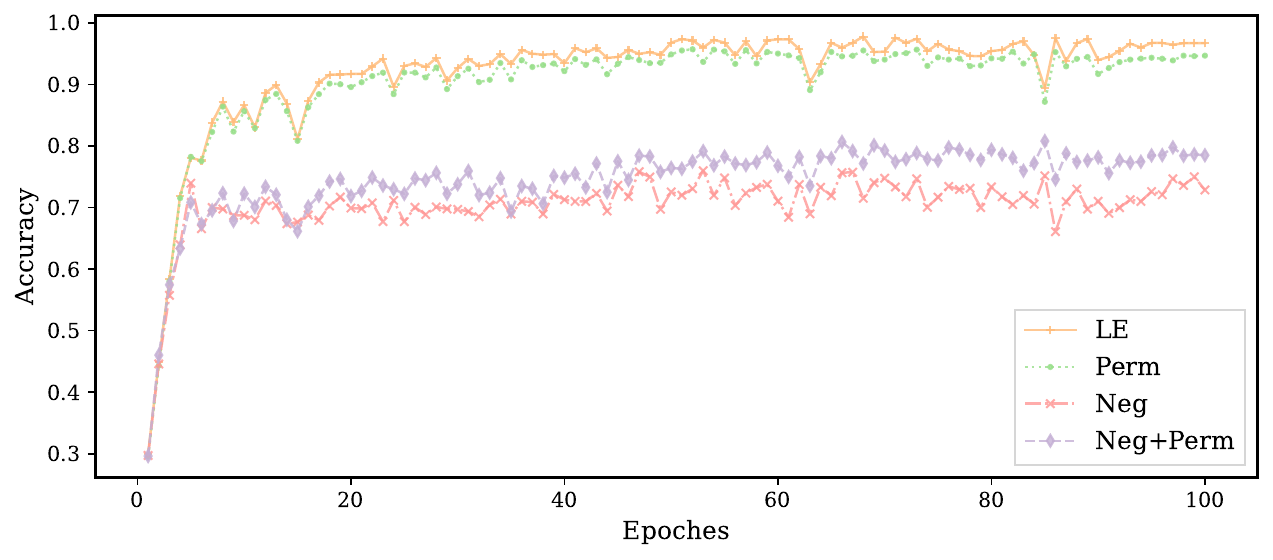}
    \caption{The accuracy curves comparison between the ``LE'', ``Neg'', ``Perm'', and ``Neg+Perm'' on the ``LE'' training dataset.}
    \label{fig:eval_np}
\vspace{-0.5cm}
\end{figure}

As the statement of the permutation-invariant operations in node embedding, the \cref{fig:eval_np} shows the accuracy curves on the four types of evaluation dataset ``LE'', ``Perm'', ``Neg'', and ``Neg+Perm'' with the ``LE'' training dataset.
It reveals that the evaluation of the permutation dataset has a similar curve compared with the ``LE'' dataset, while the curves on ``Neg'' and ``Neg+Perm'' behaviors are different and even poorer than ``Perm''.
The above results are also echoed in the previous analysis in \cref{sec:problem:influence}.

\begin{table}[t]\scriptsize
    \centering
    \caption{Accuracy on the bidigraph for the four types dataset.}
    \resizebox{0.45\textwidth}{8mm}{
    \begin{tabular}{l|cccc}
    \toprule
    \multicolumn{1}{c|}{\multirow{2}{*}{Type}} & \multicolumn{4}{c}{Accuracy}                     \\
    \multicolumn{1}{l|}{}                      & LE                  & Neg          & Perm     & Neg$\oplus$Perm   \\
    \midrule
    with inverter                              &  94.22($\pm{2.31}$) & 80.6($\pm{1.15}$) &  92.29($\pm{1.95}$) &  83.73($\pm{1.16}$)    \\
    without inverter                           &  99.14($\pm{0.55}$) & 98.76($\pm{0.65}$) &  98.75($\pm{0.7}$) &  98.74($\pm{0.75}$)    \\
    \bottomrule
    \end{tabular}}
    \label{tab:inverter}
\vspace{-0.5cm}
\end{table}

\begin{figure}[t]
    \centering
    \vspace{-2em}
    \subfigure[Digraph \textbackslash w inverter with the ``LE'' training dataset on GCN: accuracy curve.]{ 
        \begin{minipage}[t]{0.22\textwidth}
        \centering
        \includegraphics[width=1\textwidth]{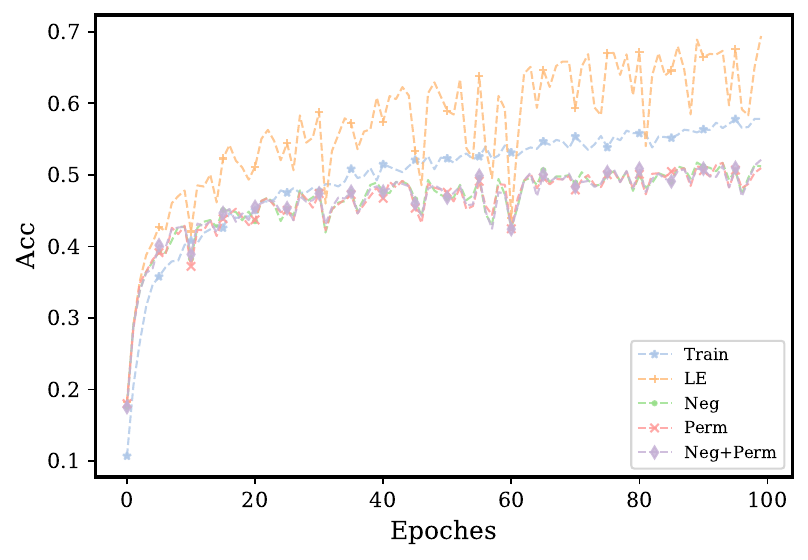}
        \end{minipage}
    }
    \vspace{-1em}
    \hfill
    \subfigure[Bidigraph \textbackslash wo inverter with the ``LE''+``Neg'' training dataset on GCN+Pooling: accuracy curve.]{ 
        \begin{minipage}[t]{0.22\textwidth}
        \centering
        \includegraphics[width=1\textwidth]{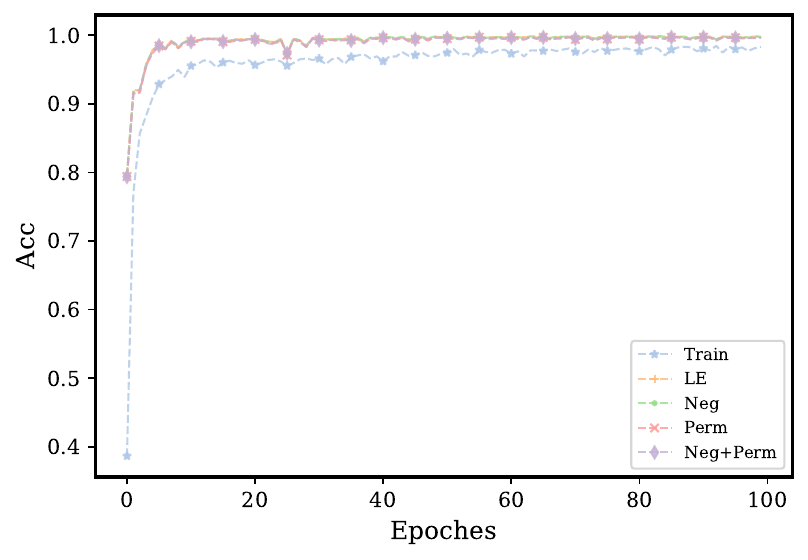}
        \end{minipage}
    }
    \vspace{-1em}
    \subfigure[Digraph \textbackslash w inverter with the ``LE'' training dataset on GCN: t-SNE visualization of ``Neg+Perm'' dataset.]{ 
        \begin{minipage}[t]{0.22\textwidth}
        \centering
        \includegraphics[width=1.1\textwidth]{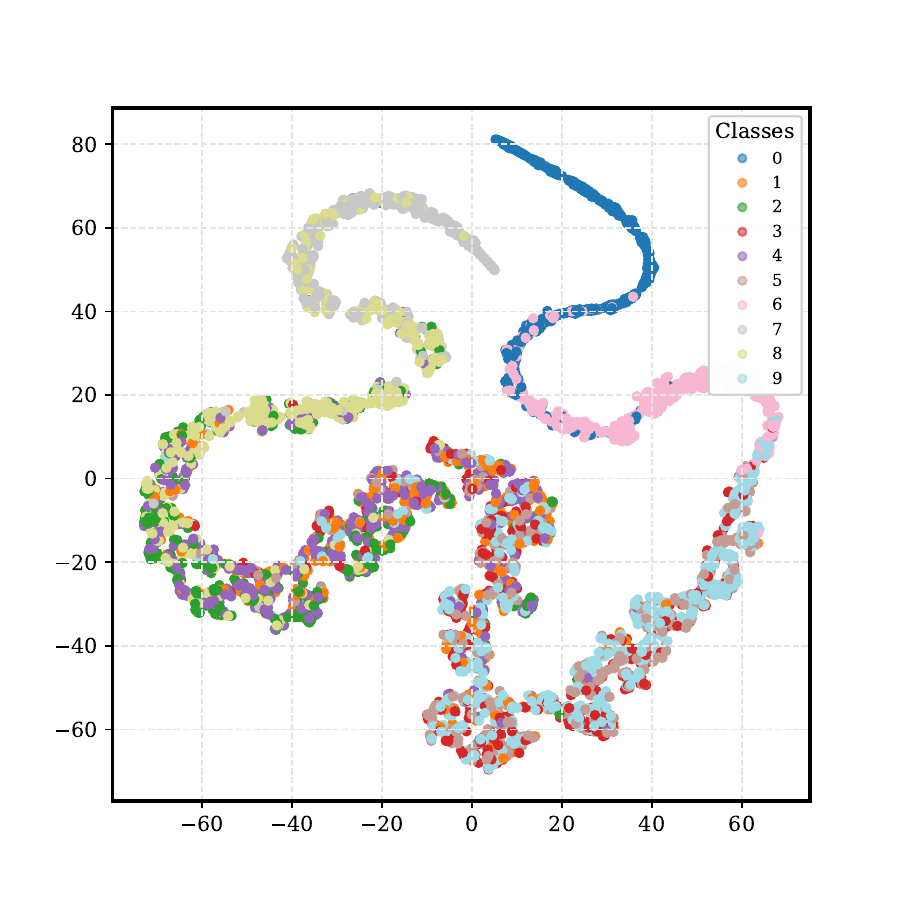}
        \end{minipage}
    }
    \hfill
    \subfigure[Bidigraph \textbackslash wo inverter with the ``LE''+``Neg'' training dataseton GCN+Pooling: t-SNE visualization of ``Neg+Perm'' dataset.]{ 
        \begin{minipage}[t]{0.22\textwidth}
        \centering
        \includegraphics[width=1.1\textwidth]{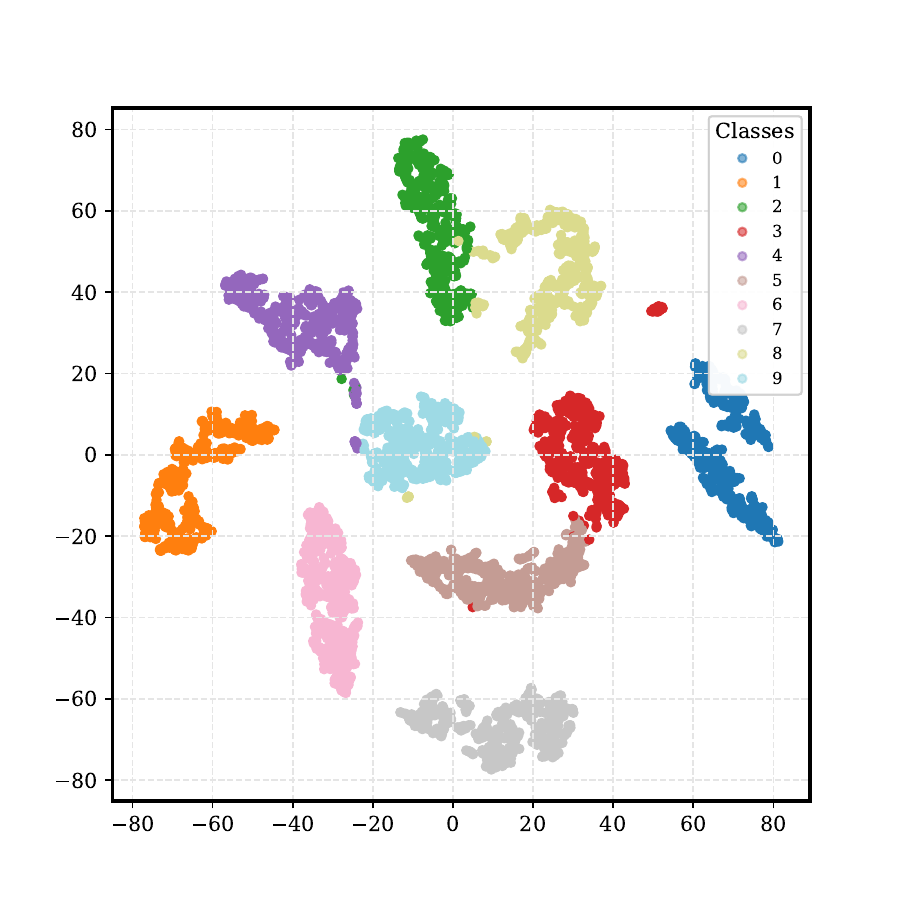}
        \end{minipage}
    }
    \caption{The visualization of the accuracy curve and t-SNE two extreme conditions.}
    \label{fig:eval_two_condition}
\end{figure}

Thus, the key factor is to solve the influence from the negation operation $\nu$.
The \cref{tab:inverter} shows the result for the inverter removal solution by the preprocessing in \cref{sec:study:preprocessing} with the training dataset on ``LE''.
It indicates that it is possible to align the four types of datasets by removing the inverter.

\subsubsection{Comparison of two extreme conditions\protect\footnote{Due to the page limit, we only list the two extreme cases here.}}

The \cref{fig:eval_two_condition} shows the two extreme conditions:
(1) \textbf{Cond1}: \cref{fig:eval_two_condition}(a) + \cref{fig:eval_two_condition}(c): it shows the classification result without any decoration on the GCN model; 
(2) \textbf{Cond2}: \cref{fig:eval_two_condition}(b) + \cref{fig:eval_two_condition}(d): it shows the classification result with the proposed preprocessing on the GCN + Pooling model.
The comparison of the accuracy curves shows the proposed tricks are useful in improving the accuracy and alignment for the proposed matching equivalent class.
The t-SNE figures on the ``Neg+Perm'' dataset also visualize the classification result.

\section{Applications}
\label{sec:application}




\begin{figure}[t]
    \vspace{-0.5cm}
    \centering
    \includegraphics[width=0.45\textwidth]{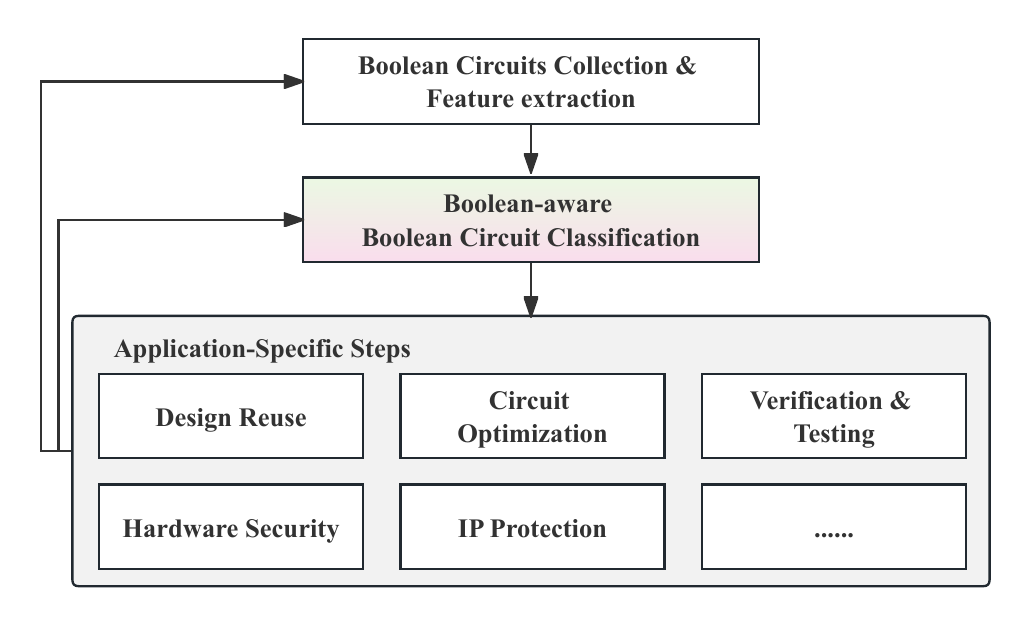}
    \caption{The specific applications.}
    \label{fig:application}
    \vspace{-0.5cm}
\end{figure}

The \cref{fig:application} shows the possible applications related to the proposed Boolean-aware Boolean circuit classification problem.
The proposed solution can enhance these applications more than the previous logic equivalent class.
As for a given application in \cref{fig:application}, the first step is to collect the task-related Boolean circuits and extract the features, then the Boolean-aware classification algorithm generates solutions for the downstream tasks as follows:
\begin{itemize}
    \item Design Reuse.
    \textbf{Purpose}: Facilitating the reuse of proven circuit designs in new projects;
    \textbf{Approach}:  Classify circuits to create a repository of reusable designs.
    \item Circuit Optimization. 
    \textbf{Purpose}: Identifying and replacing inefficient circuit designs; 
    \textbf{Approach}: Classify existing circuit designs based on efficiency metrics.
    \item Verification \& Testing. 
    \textbf{Purpose}: Enhancing the verification and testing processes for digital circuits; 
    \textbf{Approach}:  Classify circuits to categorize them based on testability and verification status.
    \item Hardware Security. 
    \textbf{Purpose}: Detecting and mitigating hardware Trojans; 
    \textbf{Approach}: Classify circuits to identify deviations from known good designs.
    \item IP Protection. 
    \textbf{Purpose}: Protecting intellectual property by identifying unauthorized use of proprietary circuits; 
    \textbf{Approach}: Classify proprietary circuits and store their signatures.
    \item ...
\end{itemize}

We demonstrate that all these applications can get a good initial solution by the Boolean-aware classification approach.

\section{Conclusion}
\label{sec:conclusion}
In this paper, we introduced a novel approach to Boolean circuit classification that leverages Boolean-aware methodologies.
We first proposed the \textit{matching equivalent class} and its corresponding classification problem based on the commonly used Boolean transformation in logic synthesis.
Then, a common study framework based on GNN is presented to analyze the improvement by the graph manipulation technology.
The experimental results echo the proposed theorems and analysis. 

Future work will investigate the applicability of our method to other domains where Boolean classification plays a critical role.
Overall, our proposed Boolean-aware Boolean circuit classification study framework provides a promising direction for enhancing the reliability and performance of Boolean circuit analysis.

\bibliographystyle{IEEEtran}
\bibliography{classifier}

\end{CJK}
\end{document}